\theoremstyle{plain}
\newtheorem{theorem}{Theorem}[section]
\newtheorem{proposition}[theorem]{Proposition}
\theoremstyle{definition}
\theoremstyle{remark}
\newtheorem{remark}[theorem]{Remark}
\theoremstyle{example}
\newtheorem{example}[theorem]{Example}
\newcommand{\RR}{\mathbb{R}}
\newcommand{\rank}{\mathrm{rank}}
\newcommand{\softmax}{\mathrm{softmax}}
\newcommand{\argmax}{\mathrm{argmax}}
\renewcommand{\exp}{\mathrm{exp}}
\newcommand{\acc}{\mathrm{acc}}
\newcommand{\pto}{\nrightarrow}
\newcommand{\D}{\mathcal{D}}
\renewcommand{\L}{\mathcal{L}}
\newcommand{\T}{\mathcal{T}}
\newcommand{\K}{\mathcal{K}}
\newcommand{\Q}{\mathcal{Q}}
\newcommand{\V}{\mathcal{V}}
\DeclareRobustCommand{\mask}{%
  \begingroup
  \setlength{\unitlength}{1.5ex}%
  \begin{picture}(1,1)
  \polyline(0,1)(0,0)(1,0)(0,1)
  \end{picture}%
  \endgroup
}
\title{Paying Attention to Facts: Quantifying the Knowledge Capacity of Attention Layers}
\author{Liang Ze Wong}
\begin{document}

\maketitle

\begin{abstract}
In this paper, we investigate the ability of single-layer attention-only transformers (i.e. attention layers) to memorize facts contained in databases from a linear-algebraic perspective.
We associate with each database a 3-tensor, propose the rank of this tensor as a measure of the size of the database, and provide bounds on the rank in terms of properties of the database.
We also define a 3-tensor corresponding to an attention layer, and empirically demonstrate the relationship between its rank and database rank on a dataset of toy models and random databases.
By highlighting the roles played by the value-output and query-key weights, and the effects of argmax and softmax on rank, our results shed light on the `additive motif' of factual recall in transformers, while also suggesting a way of increasing layer capacity without increasing the number of parameters.
\end{abstract}

\section{Introduction}
\label{sec:introduction}



Large language models (LLMs) based on transformers with attention \cite{vaswani2017attention} have proven successful at a large variety of tasks. 
While much research has focused on \emph{how well} LLMs accomplish tasks, less has been done on \emph{how} LLMs accomplish these tasks, and on the nature of the tasks themselves. 
In this paper, we take a step towards a better theoretical understanding of how a single attention-only layer can memorize facts, such as those contained in databases, RDF triple stores or knowledge graphs.

While our work is inspired by recent papers on factual recall that have sought to uncover the underlying mechanisms behind it \cite{chughtai2024summing,nichani2024understanding,yao2024knowledge,geva2020transformer} or to derive scaling laws for knowledge capacity \cite{lu2024scaling,allen2024physics}, it differs from prior research in 3 key ways: 
1) Unlike prior research which includes multi-layer perceptron (MLP) layers \cite{geva2020transformer,nichani2024understanding}, we focus solely on the role of attention layers.
2) We propose linear algebraic measures of database and attention layer size, in contrast to prior results on scaling laws which are phrased in terms of number of triples and number of model parameters \cite{lu2024scaling,allen2023physics}. 
3) Instead of probing trained LLMs and reverse-engineering weights to determine where and how facts are stored \cite{chughtai2024summing,nichani2024understanding}, we start from a theoretical perspective and build towards a mechanistic understanding of knowledge storage.

Our motivation for these choices are the observations in the literature that transformers, especially attention-only models, have `an enormous amount of linear structure' \cite{elhage2021mathematical}, and that the task of factual recall in transformers is solved additively \cite{chughtai2024summing}, a phenomena termed the `additive motif'.
These observations suggest that a linear algebraic approach to the question of factual recall in attention layers should yield interesting results.
One of the key invariants in linear algebra is the rank of a linear operator, but discussions about model rank in the literature are limited to mentions of model dimensions (e.g. the embedding dimensions and attention-head dimensions) and usually for the purpose of computing other measures of model size such as the number of parameters (e.g. \cite{lu2024scaling,allen2024physics}).
In-depth investigations into the properties of the rank of models or databases are conspicuously missing.

The objective of this paper is to address these gaps, and provide an alternative linear-algebraic angle to this active area of research.
Post-hoc interpretability research can sometimes feel like trying to look for a needle in a haystack, and we hope this work makes the task slightly easier by helping to reveal what the needle looks like.
This paper represents a small step in the direction of defining theoretically-grounded and empirically-supported measures of database and model size, and in highlighting the different roles played by the query-key and value-output matrices in factual recall.
We hope that this will shed additional light on the additive motif, as well as on the difficulties in interpreting models, even simple ones. 
As we shall see, even in our highly simplified setting, the role of the rank of databases and of attention layers is clouded by many complications and far from being completely resolved.
Nevertheless, we hope our work inspires future research in similar directions on developing a theoretical understanding of tasks and how language models accomplish them.

\subsection{Outline}
The paper is organized as follows:
in Section \ref{sec:databases_and_their_rank}, we associate with each database a $3$-tensor, propose the rank of this tensor as a linear-algebraic measure of the size of the database, and derive bounds for its rank in terms of database properties.
In Section \ref{sec:attention_layers_and_their_rank}, we similarly construct a $3$-tensor associated to an attention layer and provide bounds on its rank.
Our results suggest a method of increasing layer capacity without increasing the number of parameters.
We also investigate the effect of $\argmax$ and $\softmax$ on rank, and propose methods of assessing model accuracy that are less sensitive to these effects.
We reinforce our theoretical results with illustrative examples, as well as with experiments conducted on a dataset of toy databases and models in Section \ref{sec:experimental_results}.
Finally, in Section \ref{sec:contributions_limitations_future_work}, we summarize our findings and discuss limitations and extensions of our work.


\section{Databases and their rank}
\label{sec:databases_and_their_rank}
We will be interested in \emph{databases}, by which we mean collections of tables or dataframes like those in Fig.\ \ref{fig:db_countries}.
To each such database, we can associate a ``language'' consisting of Resource Description Framework (RDF) triples or ``sentences'' of the form ``[subject] [predicate] [object]''.
We are interested in encoding or storing the information from a database in single-layer attention-only transformer models. 
By `encoding or storing', we mean training a model on RDF triples from the database, so that the model can correctly complete partial triples of the form ``[subject] [predicate]''.
For example, a model trained on the sentences in Fig.\ \ref{fig:db_countries} should predict ``Singapore'' as the next token, when given the incomplete sentence ``Astrid born\_in''.

\begin{figure}[ht]
\vskip 0.2in
\begin{center}
	\centerline{	
	\begin{tabular}{| l || l | l |}
		\hline
		 	& born\_in & lives\_in
		\\
		\hline
		Astrid & Singapore & Malaysia
		\\
		Bernard & Singapore & Singapore
		\\
		Colin & Malaysia & Malaysia
		\\
		\hline
	\end{tabular}
	\begin{tabular}{| l || l |}
		\hline
		 	& currency
		\\
		\hline
		Malaysia & Ringgit
		\\
		Singapore & Dollar
		\\
		\hline
	\end{tabular}	
	}
	\centerline{
	\begin{tabular}{r l l}
		& & \\	
		Astrid & born\_in & Singapore \\ 
		Bernard & born\_in & Singapore \\ 
		Colin & born\_in & Malaysia \\ 
		Astrid & lives\_in & Malaysia \\ 
		Bernard & lives\_in & Singapore \\ 
		Colin & lives\_in & Malaysia \\
		Malaysia & currency & Ringgit \\
		Singapore & currency & Dollar
	\end{tabular}		
	}
\caption{A database with two tables, and its `language' of RDF triples in subject-predicate-object format.}
\label{fig:db_countries}
\end{center}
\vskip -0.2in
\end{figure}

			
				

This memorization task is a simplification of how larger transformer models store facts of the form, ``France's capital is Paris''.\footnote{An LLM trained on natural language would have to handle the contenation of separate tokens like `capital' and `is' into the predicate `capital is', but in this paper, we avoid such issues and simply assume that all subjects, predicates and objects are their own single tokens.}
We wish to understand which parts of a transformer are capable of storing such information and how the information is encoded in the weights of the model.

Abstractly, a database is a \emph{partial function} 
$$
\D \colon \T \times \T \pto \T,
$$
where $\T$ is the set of all tokens in $\D$, and $\pto$ denotes a function that is only defined on a subset of its domain.
Let $\K, \Q$ and $\V$ denote the subsets of $\T$ consisting of tokens that appear in $D$ as subjects, predicates, and objects, respectively.\footnote{Subjects, predicates and objects can be thought of as keys, queries and values, resp., hence the letters $\K, \Q$ and $\V$. Note that $\K, \Q$ and $\V$ need not be disjoint.} 
We can then treat $\D$ as a partial function $\K \times \Q \pto \V$.

We will also use $\D$ to refer to the set of tuples $\{ (k,q,v) | \D(k,q) = v \}$.
In addition statements of the form ``$(k,q,v) \in \D$'', we will say ``$(k,q) \in \D$'' if $(k,q)$ is in the domain of the function $\D$, i.e. $(k,q,v) \in \D$ for some $v$.

To any such database, we may associate a 3-tensor $D \in \RR^{|\K| \times |\Q| \times |\V|}$ where
\begin{equation}
\label{eq:def_D}
	D_{kqv} = \begin{cases}
1 \mbox{ if } (k,q,v) \in \D, \mbox{ and}\\
0 \mbox{ otherwise.}
\end{cases}
\end{equation}
\emph{Slices} of $D$ are matrices
$
	D_{k::} \in \RR^{|\Q| \times |\V|},  D_{:q:} \in \RR^{|\K| \times |\V|}, 
$ and 
$
	D_{::v} \in \RR^{|\K| \times |\Q|},  
$
obtained by fixing $k,q$ or $v$ and varying the other dimensions, while \emph{fibers} of $D$ are the vectors
$D_{kq:}$, $D_{k:v}$ and $D_{:qv}$ obtained by fixing a pair of $k,q$ or $v$.

We define the \emph{rank} of a database to be the rank of its associated tensor.\footnote{The rank of a $3$-tensor is the minimum number of simple tensors $\mathbf{u} \otimes \mathbf{v} \otimes \mathbf{w}$ that need to be summed up to compose it.}
Unlike the rank of a matrix (i.e. 2-tensors), which can easily be determined (e.g., via Gaussian elimination), there are no easy ways to determine the rank of an $n$-tensor when $n > 2$; indeed, even approximating tensor rank is NP-hard \cite{haastad1990tensor, swernofsky2018tensor}.
The number of triples in a database, which we denote $|\D|$, gives an upper bound to its rank. 
But we can get a tighter bound:
\begin{proposition}
\label{prop:db_rank}
Let $\mathcal{V}_k$ and $\mathcal{V}_q$ be the subsets of $\mathcal{V}$ that occur as possible objects in triples of $\D$ with subject $k$ or predicate $q$, respectively.
Then
\begin{align*}	
\rank(D) &\leq \min\left(\sum_{k \in \K} \rank(D_{k::}), \sum_{q \in \Q} \rank(D_{:q:})\right)
\\
	&= \min\left( \sum_{k \in \K} |\V_k|, \sum_{q \in \Q} |\V_q| \right)  \quad \leq |\D|.
\end{align*}
\end{proposition}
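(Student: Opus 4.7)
My plan is to reduce the tensor rank bound to a sum of matrix slice ranks via an axis decomposition, then evaluate those matrix ranks using the partial-function structure of $\D$, and finally compare to $|\D|$ by counting.

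For the first inequality, I would write $D = \sum_{k \in \K} e_k \otimes D_{k::}$, where $e_k \in \RR^{|\K|}$ is a standard basis vector. Given any rank-$r_k$ decomposition $D_{k::} = \sum_i u_i v_i^\top$, inserting $e_k$ in the first factor yields $e_k \otimes D_{k::} = \sum_i e_k \otimes u_i \otimes v_i$, a sum of $r_k$ simple $3$-tensors. Subadditivity of tensor rank then gives $\rank(D) \leq \sum_k \rank(D_{k::})$, and the analogous decomposition along the second axis proves the bound involving $\sum_q \rank(D_{:q:})$.

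For the middle equality, I would use the fact that $\D$ is a partial function. Fixing $k$, each row of $D_{k::}$ (indexed by $q$) has at most one nonzero entry, since $\D(k,q)$ is either undefined or a single value. Hence every nonzero row is a standard basis vector $e_v$ with $v \in \V_k$, and the set of distinct nonzero rows is exactly $\{e_v : v \in \V_k\}$, a linearly independent set of size $|\V_k|$. Thus $\rank(D_{k::}) = |\V_k|$, and $\rank(D_{:q:}) = |\V_q|$ follows by the symmetric argument on rows indexed by $k$.

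Finally, for the bound by $|\D|$: each $v \in \V_k$ corresponds to at least one $q$ with $\D(k,q) = v$, so $|\V_k| \leq |\{q : (k,q) \in \D\}|$; summing over $k$ gives $\sum_k |\V_k| \leq |\D|$, and the symmetric argument handles $q$. I do not expect any step here to be a genuine obstacle; the only point requiring care is the initial axis decomposition, which relies on the standard facts that tensor rank is subadditive and that $\rank(e \otimes M) \leq \rank(M)$ whenever $e$ is a standard basis vector and $M$ is a matrix.
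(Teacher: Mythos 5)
Your proof is correct and follows essentially the same route as the paper's: bound $\rank(D)$ by the sum of slice ranks, identify each slice rank with $|\V_k|$ (resp. $|\V_q|$) using the partial-function structure, and compare to $|\D|$ via $|\V_k| \leq |\Q_k|$. You simply spell out details the paper leaves implicit, namely the explicit decomposition $D = \sum_{k} e_k \otimes D_{k::}$ and the observation that the nonzero rows of each slice are distinct standard basis vectors.
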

\begin{proof} 
The rank of a $3$-tensor is less than the rank of the sum of its slices, and the ranks of each $k$ (treated as a function $\Q \to \V$) and each $q$ (treated as a function $\K \to \V$) are the number of unique values in $\V$ that they take.
The inequality with $|\D|$ follows from $|\D| = \sum_{k \in \K} |\Q_k| = \sum_{q \in \Q} |\K_q|$ and $|\V_k| \leq |\Q_k|$ and $|\V_q| \leq |\K_q|$, where $\Q_k$ and $\K_q$ are defined analogously to $\V_k, \V_q$.
\end{proof}

\begin{example}
\label{ex:db}
Let $\D$ be the database in Fig.\ \ref{fig:db_countries}, with abbreviations $a = \mathrm{Astrid}, b=\mathrm{Bernard}, \dots, \beta =\mathrm{born\_in}, \lambda=\mathrm{lives\_in}$ and $\kappa=\mathrm{currency}$. 

Then we have $\K = \{a,b,c,m,s\}, \Q = \{\beta, \lambda, \kappa\}$ and $\V = \{m, s, r, d\}$. Examples of slices are:
$$
D_{a::} = 
\begin{array}{cc}
    &
    \begin{array}{cccc}
    m & s & r & d 
    \end{array}
    \\
    \begin{array}{c}
    \beta \\ \lambda \\ \kappa
    \end{array}
    &
    \left(
    \begin{array}{cccc}
    0 & 1 & 0 & 0 
    \\
    1 & 0 & 0 & 0 
    \\
    0 & 0 & 0 & 0 
    \\
    \end{array}
    \right)
\end{array}
$$
$$
D_{:\beta:} = 
\begin{array}{cc}
    &
    \begin{array}{cccc}
    m & s & r & d 
    \end{array}
    \\
    \begin{array}{c}
    a \\ b \\ c \\ m \\ s
    \end{array}
    &
    \left(
    \begin{array}{cccc}
    0 & 1 & 0 & 0 
    \\
    0 & 1 & 0 & 0 
    \\
    1 & 0 & 0 & 0 
    \\
    0 & 0 & 0 & 0 
    \\
    0 & 0 & 0 & 0 
    \end{array}
    \right)
\end{array}
$$
$$
D_{:\lambda:} = 
\begin{array}{cc}
    &
    \begin{array}{cccc}
    m & s & r & d 
    \end{array}
    \\
    \begin{array}{c}
    a \\ b \\ c \\ m \\ s
    \end{array}
    &
    \left(
    \begin{array}{cccc}
    1 & 0 & 0 & 0 
    \\
    0 & 1 & 0 & 0 
    \\
    1 & 0 & 0 & 0 
    \\
    0 & 0 & 0 & 0 
    \\
    0 & 0 & 0 & 0 
    \end{array}
    \right)
\end{array}
$$
From the proposition, we have $\rank(D) \leq 6$, while $|\D| = 8$.
In fact, $\rank(D) = 5$, due to $\beta$ and $\lambda$ taking the same values on $b$ and $c$.
\end{example}

\begin{remark}
As the example shows, the rank of a $3$-tensor can be less than the sum of the ranks of its slices.
Intuitively, $|\D|$ counts facts assuming they are all independent, the tighter bounds from Proposition \ref{prop:db_rank} account for patterns \emph{within} slices but assumes the slices are independent, while the actual rank accounts for patterns \emph{across} slices.
We shall see that attention layers can exploit these patterns across slices.
\end{remark}

\subsection{Low-rank representations of databases}
\label{sec:low_rank_representations_of_databases}
Before moving on to the rank of attention layers, we pause to point out a peculiarity in how factual recall is usually evaluated.
A standard practice when assessing the factual recall of models is to train models on all triples $(k, q, v)$ in a database (e.g. the `sentences' at the bottom of Figure \ref{fig:db_countries}), and evaluate them by passing only the initial pairs $(k,q)$ into the model and comparing the output logits against $v$.
For example, if a model has been trained on the database in Figure \ref{fig:db_countries}, we would evaluate it by getting it to complete sentences of the form `Astrid born\_in', and checking if the predicted token is `Singapore'.

Consider what this means in terms of $D$: rather than evaluating the model's predictions on the \emph{whole} of $D$, we are only evaluating against the fibers $D_{kq:}$ for $(k,q) \in \D$.
We do \emph{not} get the model's predictions on `nonsensical' sentences of the form `Malaysia born\_in', although we  should if we wish to evaluate the model's ability to reproduce the whole of $D$.
We ought to check that the model predicts none of the tokens to a high degree of certainty i.e. that $D_{kq:}$ is close to $0$ for $(k,q) \notin \D$.
But this is rarely done, for the obvious practical reason that there are simply too many `nonsensical' sentences to evaluate the model on.

In this paper, we will continue to adopt the conventional practice of evaluating only on $(k,q) \in \D$.
But we keep in mind that this could significantly reduce the rank required to memorize a database, as the following example shows:

\begin{example}
\label{ex:lowrank_db}
	Consider a database with the following `facts':
	$$
		\begin{array}{ccc}
			a & f & v \\
			b & g & v \\
			c & h & v \\
			d & i & v \\
			e & j & v
		\end{array}
	$$
	The tensor associated to this database has a single $v$-slice given by the following matrix with rank $5$:
	$$
D_{::v} = 
\begin{array}{cc}
    &
    \begin{array}{ccccc}
    f & g & h & i & j 
    \end{array}
    \\
    \begin{array}{c}
    a \\ b \\ c \\ d \\ e
    \end{array}
    &
    \left(
    \begin{array}{ccccc}
    1 & 0 & 0 & 0 & 0
    \\
    0 & 1 & 0 & 0 & 0
    \\
    0 & 0 & 1 & 0 & 0
    \\
    0 & 0 & 0 & 1 & 0
    \\
    0 & 0 & 0 & 0 & 1
    \end{array}
    \right)
\end{array}
	$$
	However, this database is very easy to `memorize': simply return $v$ regardless of which pairs of input are provided.
	This corresponds to setting $D_{::v}$ to be the all-ones matrix, which has rank 1.
\end{example}

Given a database $\D$ and tensor $D$ as defined in (\ref{eq:def_D}), say that another tensor $E$ \emph{represents} $\D$ as long as $E_{kq:} = D_{kq:}$ for $(k,q) \in \D$.
We may define the \emph{effective rank} of $\D$ to be the lowest rank of tensors $E$ that represent $D$.
As the preceding example shows, the effective rank of $\D$ (1 in the example) can be much less than the actual rank of $\D$ (5 in the example).

We do not have good theoretical bounds for the effective rank of $\D$, and we will not mention effective rank in the rest of the paper.
We simply note what this digression tells us: that models trained on databases are actually seeking to reproduce low rank representations of $D$, rather than $D$ itself.
As a consequence, we will expect the bounds in Proposition \ref{prop:db_rank} to significantly over-estimate the model rank required to memorize a database.

\begin{remark}
	Although we do not go further into this issue, it is clear that it has implications for LLM hallucinations when fed nonsensical inputs (e.g. `Where is Malaysia born?').
	One way to address this issue is to expand our model evaluation methods to include random selections of nonsensical inputs, and perhaps also including special tokens to allow the model to indicate that it does not know the answer to such questions.
\end{remark}


\section{Attention layers and their rank}
\label{sec:attention_layers_and_their_rank}

We now turn to the toy transformer models that we will be using: single-layer attention-only decoders, or simply \emph{attention layers}.
Following \cite{elhage2021mathematical,nichani2024understanding}, our models will be simplified versions of the original transformer architecture \cite{vaswani2017attention} without layer normalization, biases or positional encodings.  

An attention layer $\L$ is determined by the following parameters and weights:
\begin{itemize}
	\item $|\T|$, the number of tokens (commonly denoted $d_{vocab}$)
	\item $n_{heads}$, the number of attention heads
	\item $d_{model}$, the embedding dimension
	\item $d_{head,qk}$, the dimension of the query-key matrices
	\item $d_{head,ov}$, the dimension of the value-output matrices
	\item $W_E \in \RR^{|\T| \times d_{model}}$ and $W_U \in \RR^{d_{model} \times |\T|}$, the embedding and unembedding matrices
	\item $W^h_Q \in \RR^{d_{model} \times d_{head,qk}}$ and $W^h_K \in \RR^{d_{head, qk} \times d_{model}}$, the query and key matrices for each head $h \leq n_{heads}$
	\item $W^h_V \in \RR^{d_{model} \times d_{head,vo}}$ and $W^h_O \in \RR^{d_{head, vo} \times d_{model}}$, the value and output matrices for each head $h \leq n_{heads}$
\end{itemize}
where $d_{head,qk}, d_{head,vo} \leq d_{model} \leq |\T|$. 
We will \emph{not} assume that $d_{head,qk}$ and $d_{head,vo}$ are equal. 
As we shall see, model capacity depends more on $d_{head,vo}$ than on $d_{head,qk}$.

Instead of considering these matrices separately, we form the following $|\T| \times |\T|$ matrices (called \emph{circuits} in \cite{elhage2021mathematical}) whose ranks are bounded by their inner dimensions:
\begin{align*}
	W_{EU} &= W_E W_U,  & \rank(W_{EU}) &\leq d_{model}, \\
	W^h_{QK} &= W_E W^h_Q W^h_K W_E^\top, & \rank(W^h_{QK}) &\leq d_{head, qk},\\
	W^h_{VO} &= W_E W^h_V W^h_O W_U,  & \rank(W^h_{VO}) &\leq d_{head, vo}.
\end{align*}

Given a sequence of tokens $\mathbf{x} = (x_1, x_2, \dots, x_n) \in \T^n$, we may associate a matrix $X \in \RR^{n \times |\T|}$,
$$
	X = \left(\begin{matrix}
		\text{--- } \mathbf{e}_{x_1} \text{---} \\
		\text{--- } \mathbf{e}_{x_2} \text{---} \\
		\vdots \\
		\text{--- } \mathbf{e}_{x_n} \text{---} \\
	\end{matrix}\right)
$$
where $\mathbf{e}_{x_i}$ has $1$ in the $x_i^{th}$ position and $0$ everywhere else.
Given $X$, the model produces an $n \times |\T|$ matrix of logits:
\begin{equation}
\label{eq:logits_attn}
Z(\mathbf{x}) = X W_{EU} + \sum_{h = 1}^{n_{heads}} \mask(X W^h_{QK} X^\top) X  W^h_{VO},
\end{equation}
where $\mask$ is auto-regressively masked row-wise softmax, given by:
\begin{equation}
\mask(M)_{ij} = \begin{cases}
	\frac{\exp M_{ij}}{\sum_{k = 1}^i \exp M_{ik} } \mbox{ if } i \geq j, \mbox{ and} \\
	0 \mbox{ otherwise.}
\end{cases}
\end{equation}
We can then either apply row-wise argmax to get the next token $\argmax(Z(\mathbf{x})_{n:})$, or apply (non-masked) row-wise softmax to get a probability distribution $\softmax(Z(\mathbf{x})_{n:})$ over the next token.

\subsection{The rank of an attention layer}
We wish to compute the rank of an attention layer so that we can compare it to the rank of a database.
Our hypothesis is that an attention layer can only fully memorize or encode the data in a database if the layer rank exceeds the database rank.
We will see that this hypothesis is only partly true.

So far, our discussion about attention layers applies to the general setting where the length of input ``sentences'' $(x_1, \dots, x_n)$ may vary from sentence to sentence.
From now on, we will specialize to sentences of length at most $3$.

In this situation, we may define two 3-tensors per attention head.
The first tensor $V^h \in \RR^{|\K \cup \Q| \times |\Q| \times |\V|}$ represents the \underline{v}alue-output circuit, and has entries
\begin{equation}
	V^h_{tqv} = (W^h_{VO})_{tv}
\end{equation}
$V^h$ can be visualized as a loaf of sliced bread, with each $q$-slice equal to the same $|\K \cup \Q| \times |\V|$ submatrix of $W^h_{VO}$. 

The second tensor $A^h \in \RR^{|\K| \times |\Q| \times |\K \cup \Q|}$ is derived from the query-key or \underline{a}ttention circuit, and has entries
\begin{equation}
A^h_{kqt} = \begin{cases}
	\frac{\exp (W^h_{QK})_{qt}} {\sum_{t' = q,k} \exp (W^h_{QK})_{qt'}}  \mbox{ if } (k,q) \in \D, t \in \{k,q\}\\ 
	0 \mbox{ otherwise.}
\end{cases}
\end{equation}
$A^h$ contains the values of the attention matrix for all possible pairs $(k,q) \in \D$.
We have $(A^h_{kqk}, A^h_{kqq}) = \softmax((W^h_{QK})_{qk}, (W^h_{QK})_{qq}))$, which is the last row of $\mask(XW^h_{QK}X^\top)$ when $\mathbf{x} = (k,q)$.

Multiplying $A^h$ with $V^h$, we get a $|\K| \times |\Q| \times |\V|$ tensor whose $q^{th}$ slice is the matrix product $A^h_{:q:} V^h_{:q:}$, with entries
\begin{equation}
(A^h V^h)_{kqv} = \sum_{t} {A^h_{kqt} V^h_{tqv}}.
\end{equation}

Finally, we define the tensor $E \in \RR^{|\K| \times |\Q| \times |\V|}$ representing the \underline{e}mbed-unembed circuit:
\begin{equation}
E_{kqv} = (W_{EU})_{qv}.
\end{equation}
This is another loaf with each $k$-slice equal to the same $|\Q| \times |\V|$ submatrix of $W_{EU}$.

Putting these together, we get a tensor
\begin{equation}
\label{eq:layer_tensor}
L = E + \sum_{h = 1}^{n_{heads}} A^h V^h,
\end{equation}
and by construction of $L$, we have the following:
\begin{proposition}
The fibers $L_{kq:}$ are the logits when the sequence $(k,q)$ is fed into the model:
$$
	L_{kq:} = Z(k,q)_{2:}
$$
i.e. the final logits of $Z$ from (\ref{eq:logits_attn}) evaluated on $\mathbf{x} = (k,q)$.
\end{proposition}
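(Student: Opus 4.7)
The plan is to verify the claim by matching each summand in the definition of $L$ against the corresponding piece of $Z(k,q)_{2:}$ from (\ref{eq:logits_attn}), working componentwise in $v$. Fix $\mathbf{x} = (k,q)$, so that $X \in \RR^{2 \times |\T|}$ has rows $\mathbf{e}_k^\top$ and $\mathbf{e}_q^\top$; I only need to track the second (last) row of each matrix produced by (\ref{eq:logits_attn}), since everything else will then assemble linearly.

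For the embed-unembed part, the second row of $XW_{EU}$ is $(W_{EU})_{q:}$, whose $v$-th entry is $E_{kqv}$ by definition. For each head $h$, I would unpack $(XW^h_{QK}X^\top)_{ij} = (W^h_{QK})_{x_i x_j}$, so the second row is $((W^h_{QK})_{qk}, (W^h_{QK})_{qq})$. Applying the auto-regressively masked row-wise softmax, the second row has no entries masked (both column indices are $\leq 2 = i$) and becomes exactly $\softmax((W^h_{QK})_{qk}, (W^h_{QK})_{qq}) = (A^h_{kqk}, A^h_{kqq})$ by the defining formula of $A^h$, with all remaining $A^h_{kqt} = 0$.

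Multiplying this row into $XW^h_{VO}$, whose rows are $(W^h_{VO})_{k:}$ and $(W^h_{VO})_{q:}$, produces in the $v$-th column
\[
A^h_{kqk}(W^h_{VO})_{kv} + A^h_{kqq}(W^h_{VO})_{qv} = \sum_t A^h_{kqt}\, V^h_{tqv} = (A^h V^h)_{kqv},
\]
using $V^h_{tqv} = (W^h_{VO})_{tv}$ and the vanishing of $A^h_{kqt}$ outside $\{k,q\}$. Summing over $h$ and adding $E_{kqv}$ yields $Z(k,q)_{2v} = L_{kqv}$ for every $v$, as required. The argument is essentially a bookkeeping exercise in matrix indices; the one step that merits care, and the only place things could plausibly go wrong, is verifying that the auto-regressive mask does not discard any attention weight in the second row of a length-2 input, so that the masked softmax really reduces to a softmax over just the two positions $\{k,q\}$ — which is precisely the support convention built into the definition of $A^h$.
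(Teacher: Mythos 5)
Your proof is correct and is essentially the argument the paper intends: the paper offers no explicit proof, asserting the identity holds ``by construction of $L$,'' and the key step you isolate --- that the last row of $\mask(XW^h_{QK}X^\top)$ for $\mathbf{x}=(k,q)$ is exactly $\softmax((W^h_{QK})_{qk},(W^h_{QK})_{qq}) = (A^h_{kqk},A^h_{kqq})$ --- is precisely the observation the paper records in the text immediately before the proposition. The only caveat worth noting is that your identification of the masked softmax row with the entries of $A^h$ uses the clause $(k,q)\in\D$ in the definition of $A^h$, so the identity as you prove it holds for those fibers, consistent with the paper's convention of evaluating only on $(k,q)\in\D$.
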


\begin{remark}
This proposition and the construction of $L$ show that $W_{EU}$ (manifested in $E$) and $W^h_{VO}$ (manifested in $V^h$) act as linear operations when computing the output logits.
By contrast, $W^h_{QK}$ acts more like a look-up table for populating $A^h$.
The construction of $L$, and specifically of $A^h$ and $V^h$ and their composition, makes explicit the observation in \cite{elhage2021mathematical} that `logits are a linear function of tokens'.
It also sheds light on the `additive motif' of \cite{chughtai2024summing} by showing how the values in $W^h_{VO}$ add up to approximate $D$. We illustrate with an example:
\end{remark}

\begin{example}
	We show how the circuits in an attention layer might be defined so as to give the slices $D_{:\beta:}$ and $D_{:\lambda:}$ from Example \ref{ex:db}.
	Consider a single head with the following circuits:
\begin{align*}	
W^1_{QK} &= \begin{array}{cc}
    &
    \begin{array}{ccccc}
    a & b & c & \beta & \lambda
    \end{array}
    \\
    \begin{array}{c}
    \beta \\ \lambda 
    \end{array}
    &
    \left(
    \begin{array}{ccccc}
    1 & 1 & 1 & 1 & 1
    \\
    1 & 1 & 1 & 1 & 1
    \end{array}
    \right)
\end{array}
\\
W^1_{VO} &= \begin{array}{cc}
    &
    \begin{array}{cc}
    m & s 
    \end{array}
    \\
    \begin{array}{c}
    a \\ b \\ c \\ \beta \\ \lambda
    \end{array}
    &
    \left(
    \begin{array}{cc}
    0 & 0 
    \\
    0 & 4
    \\
    4 & 0 
    \\ 
    0 & 2
    \\
    2 & 0 
    \end{array}
    \right)
\end{array}.
\end{align*}
We have $V^1_{:\beta:} = V^1_{:\lambda:} = W^1_{VO}$, while the slices of $A^1$ are
\begin{align*}
A^1_{:\beta:} &= \begin{array}{cc}
    &
    \begin{array}{ccccc}
    a & b & c & \beta & \lambda
    \end{array}
    \\
    \begin{array}{c}
    a \\ b \\ c
    \end{array}
    &
    \left(
    \begin{array}{ccccc}
    \frac{1}{2} & 0 & 0 & \frac{1}{2} & 0
    \\
    0 & \frac{1}{2} & 0 & \frac{1}{2} & 0
    \\
    0 & 0 & \frac{1}{2} & \frac{1}{2} & 0
    \end{array}
    \right)
\end{array}
\\
A^1_{:\lambda:} &= \begin{array}{cc}
    &
    \begin{array}{ccccc}
    a & b & c & \beta & \lambda
    \end{array}
    \\
    \begin{array}{c}
    a \\ b \\ c
    \end{array}
    &
    \left(
    \begin{array}{ccccc}
    \frac{1}{2} & 0 & 0 & 0 & \frac{1}{2} 
    \\
    0 & \frac{1}{2} & 0 & 0 & \frac{1}{2} 
    \\
    0 & 0 & \frac{1}{2} & 0 & \frac{1}{2} 
    \end{array}
    \right)
\end{array}.	
\end{align*}
The slices of the product $A^1V^1$ are:
\begin{align*}	
(A^1V^1)_{:\beta:} &= \begin{array}{cc}
    &
    \begin{array}{cc}
    m & s 
    \end{array}
    \\
    \begin{array}{c}
    a \\ b \\ c
    \end{array}
    &
    \left(
    \begin{array}{cc}
    0 & 1 
    \\
    0 & 3 
    \\
    2 & 1 
    \end{array}
    \right)
    \end{array}
\\
(A^1V^1)_{:\lambda:} &= \begin{array}{cc}
    &
    \begin{array}{cc}
    m & s 
    \end{array}
    \\
    \begin{array}{c}
    a \\ b \\ c
    \end{array}
    &
    \left(
    \begin{array}{cc}
    1 & 0
    \\
    1 & 2 
    \\
    3 & 0 
    \end{array}
    \right).
\end{array}
\end{align*}
Applying $\argmax$ to these slices gives the slices $D_{:\beta:}$ and $D_{:\lambda:}$. We have not needed the $W_{EU}$ circuit here, but an actual model is likely to offload some values to this circuit.

This example shows that a single value-output circuit of rank $2$ can account for \emph{both} slices $D_{:\beta:}$ and $D_{:\lambda:}$ even though they have a combined rank of $3$.
This is partly achieved by exploiting the shared values of $\beta$ and $\lambda$ on $b$ and $c$ (as alluded to in Example \ref{ex:db}), but also from the fact that $\rank(A^1 V^1) = 3$ even though $\rank(V^1) = 2$.
Indeed, $\rank(AB) \not \leq \min(\rank(A), \rank(B))$ for $3$-tensors.

Note also that $\rank(A^1) = 4$ even though $\rank(W^1_{QK}) = 1$, showing that the rank of $W^1_{QK}$ can be low without compromising the rank of $A^1$.
(In fact, we would have the same $A^1$ even if all the entries of $W^1_{QK}$ were zero.)
\end{example}

\begin{remark}
On one hand, this example shows that direct encoding of facts is possible at the level of circuits, and 
suggests that interpreting layers might be easier at the circuit level, rather than at the level of weights or embeddings.
On the other hand, even though the weights in this example were chosen to be simple, we are already seeing that $W^1_{VO}$ need not look at all like either $D_{:\beta:}$ or $D_{:\lambda:}$.
If $\rank(L)$ exceeds $\rank(D)$, there will be infinitely many ways for the circuits to be defined to give $D$, and it is reasonable to suppose that many of these circuits will be very hard to interpret, with the values of one slice of $D$ spread over multiple heads and over the embedding circuit as well.
\end{remark}

We define the \emph{rank} of a model $\L$ to be the rank of $L$ from (\ref{eq:layer_tensor}).
Based on the preceding discussion and example, we expect the rank of a model to depend more on the ranks of $W_{EU}$ and $W^h_{VO}$, but not so much on $W^h_{QK}$.
We have the following upper bound for the rank of $L$ as defined in (\ref{eq:layer_tensor}):
\begin{proposition}
\label{prop:layer_rank_upper}
	$$\rank(L) \leq d_{model} + n_{heads} \cdot d_{head,vo} \cdot |\Q|.$$
\end{proposition}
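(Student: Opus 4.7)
The plan is to use subadditivity of tensor rank on the decomposition $L = E + \sum_h A^h V^h$ and bound each summand separately, so it suffices to establish $\rank(E) \leq d_{model}$ and $\rank(A^h V^h) \leq d_{head,vo} \cdot |\Q|$ for each head.

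For $E$, the key observation is that $E_{kqv} = (W_{EU})_{qv}$ is independent of $k$. Taking any rank decomposition $W_{EU} = \sum_{i=1}^{r} \mathbf{u}_i \mathbf{v}_i^\top$ with $r = \rank(W_{EU})$, one immediately obtains $E = \sum_i \mathbf{1}_{\K} \otimes \mathbf{u}_i \otimes \mathbf{v}_i$, so $\rank(E) \leq \rank(W_{EU}) \leq d_{model}$, where the last inequality follows from the factorization $W_{EU} = W_E W_U$ through a space of dimension $d_{model}$.

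For each head $h$, I would exploit the fact that $V^h$ is also constant in the $q$-direction: every $q$-slice satisfies $V^h_{:q:} = W^h_{VO}$. Therefore the $q$-slice of the product satisfies $(A^h V^h)_{:q:} = A^h_{:q:} \, W^h_{VO}$, which is a matrix product and hence has rank at most $\rank(W^h_{VO}) \leq d_{head,vo}$. Now I use the general fact that any 3-tensor $T$ admits the decomposition $T = \sum_q T_{:q:} \otimes \mathbf{e}_q$ (with $\mathbf{e}_q$ placed in the middle factor), so if each $T_{:q:}$ has matrix rank at most $r$, then $\rank(T) \leq r \cdot |\Q|$: expanding each slice as $T_{:q:} = \sum_{j=1}^{r} \mathbf{u}^q_j (\mathbf{v}^q_j)^\top$ yields $T = \sum_q \sum_{j=1}^{r} \mathbf{u}^q_j \otimes \mathbf{e}_q \otimes \mathbf{v}^q_j$. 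Applied to $A^h V^h$, this gives $\rank(A^h V^h) \leq d_{head,vo} \cdot |\Q|$.

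Combining the two bounds via subadditivity finishes the proof. The argument is largely routine; the only conceptually delicate step is recognizing that we cannot directly apply a matrix-style bound $\rank(A^h V^h) \leq \min(\rank A^h, \rank V^h)$ to a 3-tensor product (the remark after the preceding example explicitly warns that this fails), so we must instead slice along $q$, where constancy of $V^h$ in $q$ reduces the slice-wise problem to ordinary matrix rank. Hence the $|\Q|$ factor in the final bound is essentially the price paid for peeling off the $q$-direction slice by slice.
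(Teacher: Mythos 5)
Your proof is correct and follows essentially the same route as the paper's: bound each $q$-slice of $A^hV^h$ by $\rank(W^h_{VO}) \leq d_{head,vo}$, sum over the $|\Q|$ slices, bound $\rank(E)$ by $\rank(W_{EU}) \leq d_{model}$, and conclude by subadditivity. You merely make explicit the slice-to-tensor rank lemma and the rank-one expansion of $E$ that the paper leaves implicit.
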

\begin{proof}
	Along each $q$-slice, we have $\rank((A^hV^h)_{:q:}) \leq \rank(V^h_{:q:}) \leq \rank(W^h_{VO}) \leq d_{head, vo}$.
	As there are $|\Q|$ such slices, we have $\rank(A^hV_h) \leq d_{head,vo} \cdot |\Q|$.
	Meanwhile, $\rank(E) \leq \rank(W_{EU}) \leq d_{model}$.
	Finally, applying the subadditivity of rank yields the result.
\end{proof}

Surprisingly, the inequality contains $|\Q|$, which is a property of $\D$ rather than $\L$, suggesting that the capacity of an attention layer might depend on the context in which it is being trained and evaluated.
To get a tighter upper bound, we should replace $|\Q|$ with the number of $q$s that each head contributes to, which could vary from head to head.
On the other hand, to get a \emph{lower} bound, we feel it is reasonable to assume that each head contributes to at least \emph{one} $q$. 
We also feel it is reasonable to assume that the process of training an attention layer will cause the spans of $E$ and $V^h$ to have as little intersection as possible, so that we are likely to have 
\begin{equation}
\label{eq:layer_rank_lower}
	d_{model} + n_{heads} \cdot d_{head,vo} \leq \rank(L).
\end{equation}

Note that $d_{head,qk}$ does not appear in Proposition \ref{prop:layer_rank_upper} or (\ref{eq:layer_rank_lower}).
As the example shows, $\rank(A^h)$ can be much higher than $\rank(W^h_{QK})$, so $d_{head,qk}$ does not significantly affect the capacity of the attention layer.
Indeed, the $W^h_{VO}$ weights bear the burden of storing the entries of $D$, while the $W^h_{QK}$ weights are only responsible for indicating which key-query pairs in $\D$ are relevant for each head.

Now that we have defined the rank of a database and the rank of an attention layer, we might guess that a test 
for whether an attention layer can memorize a database would be
$
	\rank(D) \leq \rank(L).
$
Combining Proposition \ref{prop:db_rank} and (\ref{eq:layer_rank_lower}) gives the following condition:
\begin{equation}
\label{eq:condition}
	\min\left( \sum_{k \in \K} |\V_k|, \sum_{q \in \Q} |\V_q| \right) \leq d_{model} + n_{heads} \cdot d_{head,vo}. 
\end{equation}

While this is a sufficient condition, it assumes that $L$ exactly replicates $D$ i.e. $L = D$, which is overly strict.
Indeed, we usually either take $\argmax(L_{kq:})$ if we want to get a token prediction, or $\softmax(L_{kq:})$ if we want to get a probability distribution over output tokens.
A more appropriate condition would be
$\rank(D) \leq \rank(\argmax(L))$ or $\rank(D) \leq \rank(\softmax(L))$.
One would hope that $\rank(\argmax(L))$ and $\rank(\softmax(L))$ are related to $\rank(L)$ in some way.
Unfortunately, as we shall see, $\argmax$ and $\softmax$ can depend very little on $\rank(L)$.

\subsection{Effects of argmax and softmax on rank}
Recall the definitions of row-wise $\argmax$ and $\softmax$:
\begin{align}
	\argmax(M)_{ij} &= \begin{cases}
		1 \mbox{ if } M_{ij} = \max(M_{i:}) \\
		0 \mbox{ otherwise}
	\end{cases} \label{eq:argmax} \\
	\softmax(M)_{ij} &= \frac{\exp(M_{ij})}{\sum_k \exp(M_{ik})}. \label{eq:softmax}
\end{align}
If $\rank(M) = 1$, it is easy to see that $\rank(\argmax(M)) \leq 2$.
But if $\rank(M) \geq 2$, anything goes:
\begin{proposition}
	\label{prop:argmax_rank}
	For $2 \leq r \leq n$, there are $n \times n$ matrices $M$ with  $\rank(M) = r$ but $\rank(\argmax(M)) = n.$
\end{proposition}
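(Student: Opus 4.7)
The plan is to exhibit, for each $r$ with $2 \le r \le n$, an explicit $n \times n$ matrix $M$ of rank $r$ whose row-wise $\argmax$ equals the identity matrix $I_n$, which has rank $n$. The construction uses a rank-$2$ base and a diagonal perturbation to reach higher ranks.

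First I would handle the base case $r = 2$ by setting $M^{(2)}_{ij} = 2ij - j^2$. Rewriting $M^{(2)} = 2\,\mathbf{u}\mathbf{v}^\top - \1\mathbf{w}^\top$ with $\mathbf{u} = \mathbf{v} = (1,2,\dots,n)^\top$ and $\mathbf{w} = (1,4,\dots,n^2)^\top$ immediately yields $\rank(M^{(2)}) \le 2$, with equality since $\mathbf{v}$ and $\mathbf{w}$ are linearly independent for $n \ge 2$. The identity $M^{(2)}_{ij} = i^2 - (i-j)^2$ shows row $i$ has a strict row-maximum at column $j = i$, so $\argmax(M^{(2)}) = I_n$.

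For $r \ge 3$ I would perturb $M^{(2)}$ on the diagonal. Let $P$ be the diagonal $\{0,1\}$-matrix with $P_{kk} = 1$ exactly when $3 \le k \le r$, and set $M^{(r)} = M^{(2)} + \varepsilon P$ for some $\varepsilon > 0$. Subadditivity of rank gives $\rank(M^{(r)}) \le \rank(M^{(2)}) + \rank(P) = r$. For the matching lower bound I would show that rows $1,\dots,r$ of $M^{(r)}$ are linearly independent: each such row lies in $\mathrm{span}(\mathbf{v}^\top, \mathbf{w}^\top, \mathbf{e}_3^\top,\dots,\mathbf{e}_r^\top)$, and these $r$ vectors are themselves linearly independent (the basis vectors $\mathbf{e}_k$ for $k \ge 3$ vanish in the first two coordinates, where $\mathbf{v}$ and $\mathbf{w}$ are already independent). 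Expanding any vanishing combination of the rows then forces all coefficients to zero, giving $\rank(M^{(r)}) = r$.

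Verifying the argmax is immediate: rows $1, 2$ and rows $r+1,\dots,n$ of $M^{(r)}$ coincide with those of $M^{(2)}$, so their argmax is still at the diagonal; for $3 \le i \le r$ the perturbation adds $\varepsilon > 0$ only to $M^{(r)}_{ii}$, which was already the strict row maximum, so its argmax stays at $j = i$ as well. The main obstacle one might anticipate, simultaneously achieving rank exactly $r$ and preserving the argmax pattern, is sidestepped by this specific diagonal perturbation: it injects $r - 2$ fresh standard-basis directions into the row span while only strengthening the diagonal dominance already built into $M^{(2)}$, and it does so for any $\varepsilon > 0$ rather than requiring a smallness/continuity argument.
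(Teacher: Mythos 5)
Your proof is correct, and it takes a genuinely different route from the paper's. Both arguments rest on the same key idea, namely forcing the diagonal to be the strict maximum of each row so that $\argmax(M) = I_n$, but the paper realizes this with a Gram matrix: it takes $n$ distinct unit vectors in $\RR^r$, so the diagonal entries are $1$ and (by Cauchy--Schwarz) the off-diagonal entries are strictly smaller, and it asserts the resulting matrix has rank $r$ (strictly speaking this needs the chosen points to span $\RR^r$, which ``distinct'' alone does not guarantee --- a small implicit assumption your construction avoids). You instead give a fully explicit, elementary construction: the rank-$2$ matrix $M^{(2)}_{ij} = i^2 - (i-j)^2$ with strictly dominant diagonal, bumped up to rank exactly $r$ by adding $\varepsilon$ times a $0/1$ diagonal matrix supported on positions $3,\dots,r$, with the rank lower bound checked by exhibiting $r$ linearly independent rows. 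Your argument is longer but completely constructive and verifiable by hand; the paper's is shorter and its Gram matrix (with diagonal identically $1$) is reused verbatim in the subsequent $\softmax$ result, though your matrix would serve there equally well since the only property needed is strict diagonal dominance of each row's maximum.
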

\begin{proof}
	Take $n$ distinct points $\mathbf{v}_1, \dots, \mathbf{v}_n$ on the unit sphere in $\RR^r$, and let $M$ be their Gram matrix: $M_{ij} = \langle \mathbf{v}_i, \mathbf{v}_j \rangle$.
	Then $M$ is a rank $r$ matrix with ones along the diagonal and off-diagonal entries strictly less than $1$, so $\argmax(M)$ is the $n \times n$ identity matrix.
\end{proof}

With an extra step, we can get a similar result for $\softmax$:
\begin{proposition}
	\label{prop:softmax_rank}
	For $2 \leq r \leq n$, there are $n \times n$ matrices $M$ with  $\rank(M) = r$ but $\rank(\softmax(M)) = n.$
\end{proposition}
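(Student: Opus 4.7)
The plan is to reuse the Gram matrix construction from the proof of Proposition \ref{prop:argmax_rank} and exploit the familiar fact that softmax with a large multiplicative factor (equivalently, low temperature) approaches argmax. Since the matrix constructed there has a strict row maximum on the diagonal, scaling it up will make softmax concentrate almost all of each row's mass on the diagonal entry, producing a matrix close to the identity and hence of full rank.

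First, take $M$ to be the rank-$r$ Gram matrix from Proposition \ref{prop:argmax_rank}, so that $M_{ii} = 1$ for every $i$ and $M_{ij} < 1$ strictly for every $i \neq j$. For any scalar $\tau > 0$, the matrix $\tau M$ still has $\rank(\tau M) = r$, since scaling by a nonzero constant preserves rank.

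Next, analyze $\softmax(\tau M)$ as $\tau \to \infty$. In each row $i$ of $\tau M$ the diagonal entry strictly dominates every other entry, so $\exp(\tau M_{ii})$ dominates the denominator $\sum_k \exp(\tau M_{ik})$ in the limit. It follows that $\softmax(\tau M)_{ii} \to 1$ and $\softmax(\tau M)_{ij} \to 0$ for $j \neq i$, so $\softmax(\tau M)$ converges entrywise to the identity matrix $I_n$, which has rank $n$.

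Finally, invoke lower semicontinuity of rank: the set of $n \times n$ matrices of full rank is the complement of the zero set of the determinant, hence open. Since $I_n$ lies in this open set, $\softmax(\tau M)$ does too for all sufficiently large $\tau$. Pick any such $\tau$; then $\tau M$ has rank $r$ while $\softmax(\tau M)$ has rank $n$, completing the proof. The only real subtlety is the semicontinuity step, but it reduces to the standard observation that having a nonzero determinant is an open condition, so no quantitative bound on $\tau$ is actually needed.
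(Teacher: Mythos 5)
Your proof is correct and is essentially the paper's argument: both take the Gram matrix from Proposition~\ref{prop:argmax_rank} and scale it by a large constant so that softmax concentrates on the diagonal. The only difference is the final certification of full rank --- you pass to the limit $I_n$ and invoke openness of the nonzero-determinant condition, whereas the paper picks $c$ explicitly so that $\softmax(cM)$ is strictly diagonally dominant and hence invertible; both are valid, with the paper's version being slightly more quantitative.
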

\begin{proof}
	Let $M$ be the matrix from the previous proof. 
	Simply choose a large enough $c > 0$ such that $(e^{M_{ii}})^c > \sum_{j\neq i} (e^{M_{ij}})^c$ for all $i$.
	This is always possible as long as the diagonal is strictly greater than the off-diagonal entries, as its exponent will eventually grow large enough to dominate the other entries.
	Then $\rank(cM) = \rank(M) = r$, but $\softmax(cM)$ is diagonally dominant, hence invertible.
\end{proof}

Both of these results are somewhat artificial, as they assume that we can compute with arbitrarily small or arbitrarily large magnitudes.
Having limited machine precision would reduce the maximum possible rank of $\argmax(M)$, as it would prevent us from distinguishing points that are too close.
Similarly, the proof for $\softmax$ requires that we can compute with large enough $c$.
It seems unlikely that we will encounter such extreme examples when dealing with logits produced by attention layers.
Indeed, the fact that a `softmax bottleneck' has been observed (e.g. in \cite{yang2017breaking}) suggests that softmax does not significantly increase the rank of matrices encountered in real-world scenarios.
Nevertheless, these results act as a warning that $\argmax$ and $\softmax$ can potentially distort rank.







In order to regain some control on rank, we will define a modified version of softmax.
Beyond rank considerations, this modification is motivated by a separate consideration:

Suppose we have a vector of next-token probabilities $\mathbf{p} = (p_1, \dots, p_{|\T|})$.
Such a vector might be produced by applying $\softmax$ to the logits $L_{kq:}$, for example.
If the correct token is $t$, $p_t$ represents the probability that we get the correct token if we draw from the distribution $\mathbf{p}$, which might still be very small even if it is larger than all the other probabilities.
It would be too lenient to say that a prediction is correct as long as $p_t$ is the largest.

A better way would be to additionally require that $p_t$ exceed a certain threshold $\tau$, e.g. $\tau = 0.5$ or $0.9$.
If $\tau \geq 0.5$, the condition that $p_t \geq \tau$ automatically implies that $p_t$ is the largest in $\mathbf{p}$.
With this in mind, for $\tau \in [0.5, 1]$, we define
\begin{equation}
	\softmax_{\geq \tau}(M)_{ij} = \begin{cases} 
	1 \mbox{ if } \softmax(M)_{ij} \geq \tau, \\
	0 \mbox{ otherwise.}
	\end{cases}
\end{equation}
We also define the \emph{$\tau$-accuracy} to be
\begin{equation}
	\acc_{\geq \tau}(L, D) = \frac{1}{|\D|} \sum_{(k, q, v) \in \D} \softmax_{\geq \tau}(L)_{kqv},
\end{equation}
This function simply counts the number of correct predictions, divided by the total number of triples in $\D$.
We say that a layer \emph{$\tau$-memorizes} a database if $\acc_{\geq \tau}(L,D) = 1$. 
In this case, we have $D = \softmax_{\geq \tau} (L)$.
So we expect that a model $\L$ can $\tau$-memorize $\D$ if 
\begin{equation}
\rank(D) \leq \rank(\softmax_{\geq \tau}(L)).
\end{equation}
One would hope that $\rank(\softmax_{\geq \tau}(L))$ now bears some relation to $\rank(L)$.
Unfortunately, by choosing large enough $c$ in the proof of Proposition \ref{prop:softmax_rank}, we can theoretically also get $\rank(\softmax_{\geq \tau})$ to take any value.\footnote{For example, the $c$ for Proposition \ref{prop:softmax_rank} also works for $\tau = 0.5$.}
Nevertheless, we can hope that choosing a large enough $\tau$ will cause the $c$ required to be unreasonably big, so that in practice $\softmax_{\geq \tau}$ will not increase the rank by too much.

We will see empirically that varying $\tau$ allows us to control how far the rank of $\softmax_{\geq \tau} (L)$ deviates from the rank of $L$.
As $\tau$ increases, $\rank(\softmax_{\geq \tau}(L))$ decreases and gets closer to $\rank(L)$, so that (\ref{eq:condition}) becomes a good indicator of whether $\L$ can memorize $\D$.
However, when $\tau$ is too big, $\rank(\softmax_{\geq \tau} (L))$ can become less than $\rank(L)$, and 
eventually goes to $0$ when $\tau = 1$ (since a softmax probability of $1$ requires the other logits to be $-\infty$).

\section{Experimental results}
\label{sec:experimental_results}
We have so far focused on defining and understanding the theoretical properties of the ranks of databases and attention layers.
We turn now to experimental results to shed further light on these quantities.
We make use of a small dataset of randomly generated databases and attention layers trained on these databases.
Our dataset consists of 548 databases with at most 200 triples, and 364 attention layers with $1 \leq d_{head, ov}, d_{head,kq} \leq d_{model} \leq 6$ and $1 \leq n_{heads} \leq 4$.

\begin{figure}[ht]
\centering     
\subfigure[Number of database triples against the database rank upper bound from Proposition \ref{prop:db_rank}.]{\label{fig:triples_db_rank}\includegraphics[width=0.48\columnwidth]{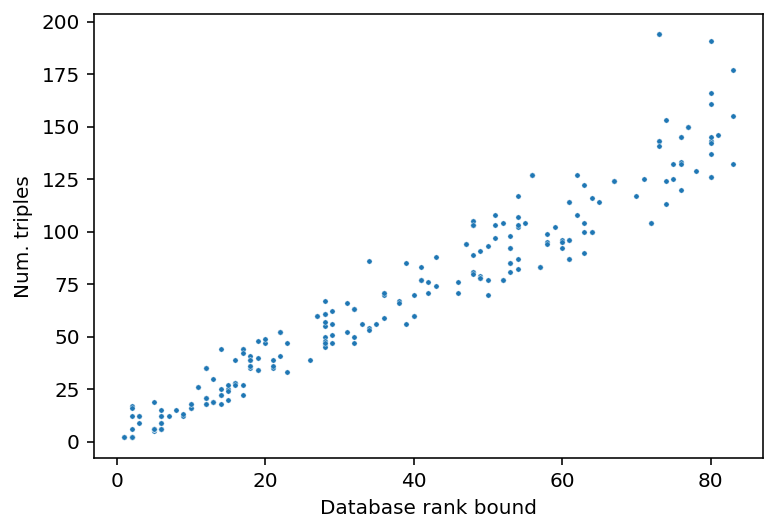}}
\hspace{2mm}
\subfigure[Number of non-embedding attention layer parameters against the attention layer rank lower bound estimate from (\ref{eq:layer_rank_lower}).]{\label{fig:param_layer_rank}\includegraphics[width=0.48\columnwidth]{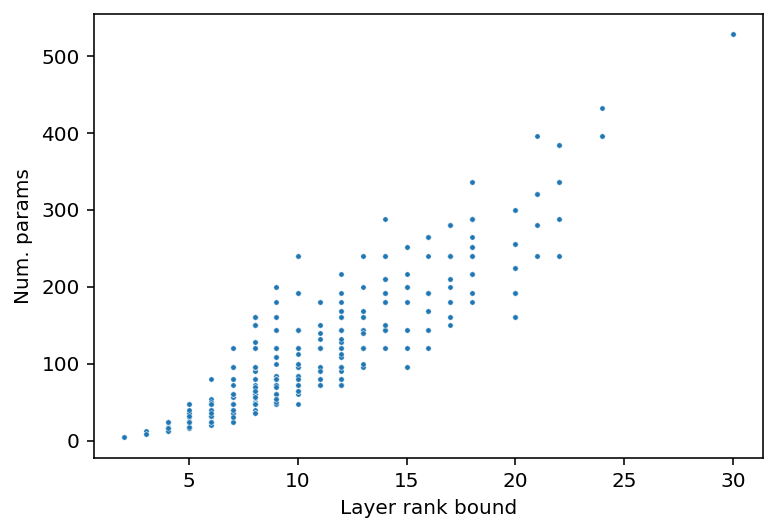}}
\caption{Relationship of database rank and attention layer ranks to other measures of database and layer size.}
\end{figure}

\begin{figure}[ht]
\centering     
\subfigure[Average accuracy with $\argmax$]{\label{fig:acc_0}\includegraphics[width=0.48\columnwidth]{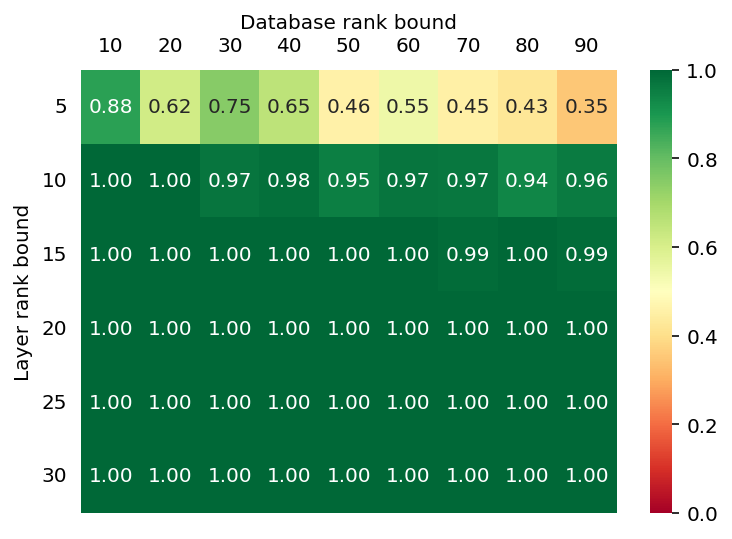}}
\hspace{2mm}
\subfigure[Average accuracy with $\softmax_{\geq 0.75}$]{\label{fig:acc_75}\includegraphics[width=0.48\columnwidth]{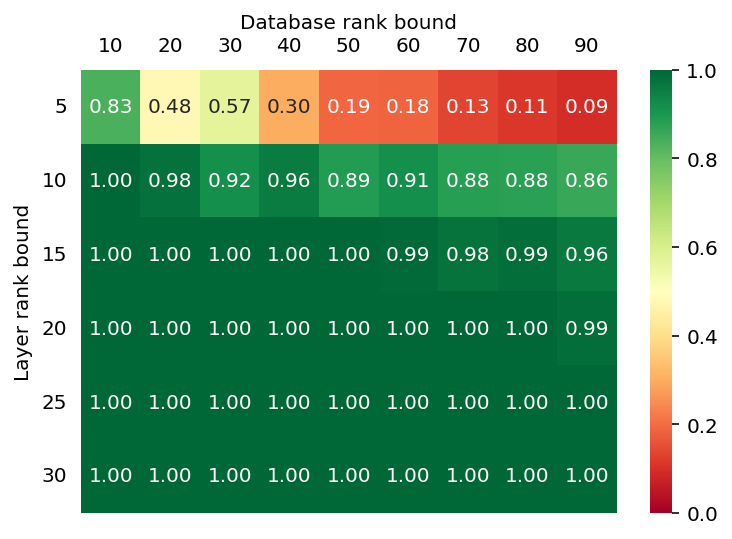}}
\subfigure[Average accuracy with $\softmax_{\geq 0.95}$]{\label{fig:acc_95}\includegraphics[width=0.48\columnwidth]{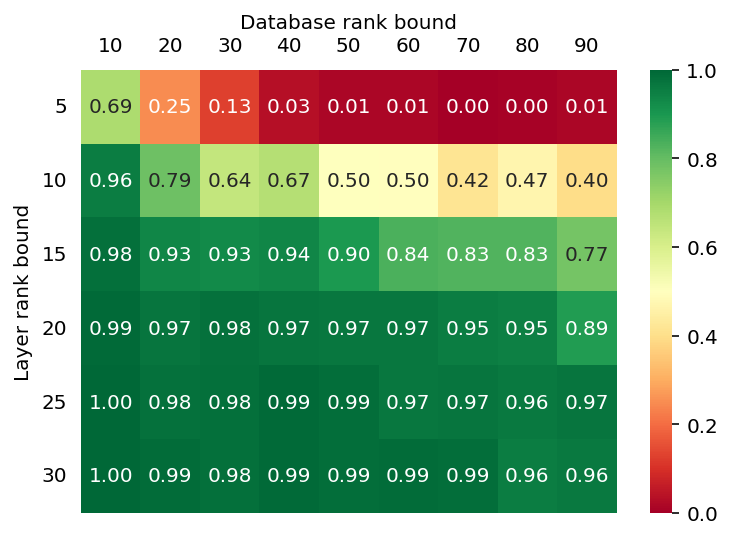}}
\hspace{2mm}
\subfigure[Average accuracy with $\softmax_{\geq 0.99}$]{\label{fig:acc_99}\includegraphics[width=0.48\columnwidth]{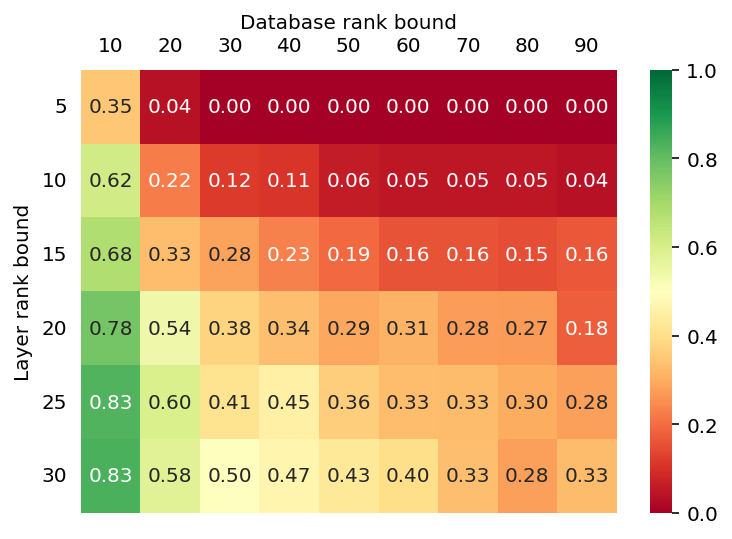}}
\caption{Average accuracy, with predictions made using $\argmax$ or $\softmax_{\geq \tau}$, $\tau = 0.75, 0.95, 0.99$, of attention layers trained on databases.
Layer rank bound is the lower bound from (\ref{eq:layer_rank_lower}), while database rank bound is the upper bound from Proposition \ref{prop:db_rank}.
Each $(i,j)^{th}$ cell averages accuracies of layer-database pairs with layer rank bound $\in (i-5,i]$ and database rank bound $\in (j-10, j]$.
}
\end{figure}

We begin by showing the relationship of the rank bounds in Proposition \ref{prop:db_rank} and (\ref{eq:layer_rank_lower}) to other popular measures of database size and attention layer size.
Figure \ref{fig:triples_db_rank} shows that the number of database triples varies approximately linearly with the upper bound of database rank from Proposition \ref{prop:db_rank}, with the number of triples being approximately 1.8 times the database rank upper bound. This suggests that both $\rank(\D)$ and $|\D|$ can be used as measures of database size, after the appropriate scaling factor has been accounted for.

On the other hand, Figure \ref{fig:param_layer_rank} shows that the number of (non-embedding) parameters and the lower bound from  (\ref{eq:layer_rank_lower}) are still positively correlated, but non-linearly and with wider spread, as is to be expected by comparing (\ref{eq:layer_rank_lower}) with
$$
	\mbox{Num. params} = 2 \cdot n_{heads} \cdot d_{model} \cdot (d_{head,vo} + d_{head,qk}).
$$

From this dataset, 3,947 random pairs of databases and attention layers were selected, and the layers were trained on database triples for a maximum of 2,000 epochs. The final (training) accuracy at various thresholds was recorded.\footnote{We are interested in evaluating how well attention layers have memorized the training set, so there is no separate test set.  We evaluate accuracy based only on final token predictions, but the model does not `know' this, and seeks to predict earlier tokens too.} Figure \ref{fig:acc_0} shows how well these attention layers have succeeded in memorizing the databases, assuming that predictions are taken using argmax of logits.
We see that even tiny models of rank up to 10 can memorize databases of rank 80, providing evidence of the significant rank-increasing effect of $\argmax$ that we might expect based on Proposition \ref{prop:argmax_rank}.

Figures \ref{fig:acc_75} to \ref{fig:acc_99} show how the memorization success changes when we use $\softmax_{\geq \tau}$ with varying values of $\tau$.
When $\tau = 0.5$, the results were almost identical to Figure \ref{fig:acc_0} (and are thus omitted), showing that $\softmax_{\geq 0.5}$ can distort rank as much as $\argmax$.
When $\tau = 0.75$ and $0.95$, we see a stronger dependence between layer rank and database rank. 
At $\tau = 0.75$, a layer can memorize a database whose rank is about 3-4 times the layer rank, while at $\tau = 0.95$, this factor drops to 1-2.
However, when $\tau = 0.99$, attention layers cannot even memorize databases with rank less than $\rank(L)$, suggesting that $\tau = 0.99$ is too strict of a requirement (or more epochs are needed).

Across all values of $\tau$, increasing the lower bound from (\ref{eq:layer_rank_lower}) increases the memorization accuracy, showing that despite its failure to account for the effects of $\softmax$, the bound is still a meaningful measure of attention layer capacity.

\begin{figure}[ht]
\vskip 0.2in
\begin{center}
\centerline{\includegraphics[width=0.6\columnwidth]{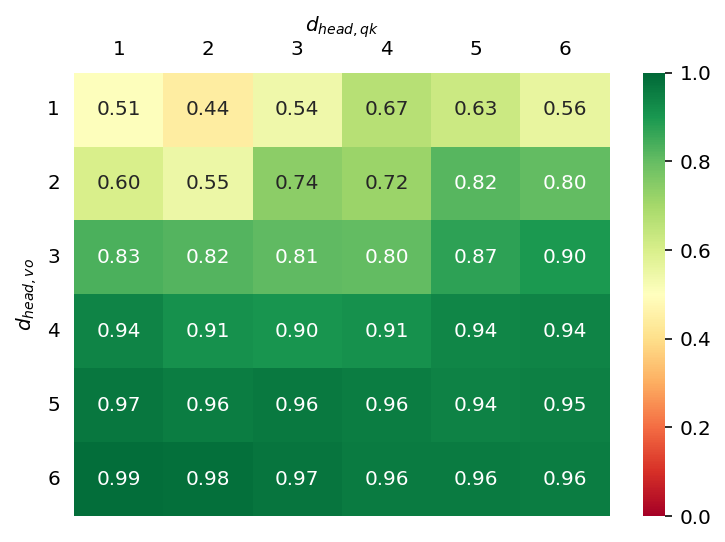}}
\caption{Average accuracy of attention layers with $\softmax_{\geq 0.95}$ as $d_{head,vo}$ and $d_{head,qk}$ vary.}
\label{fig:vo_qk}
\end{center}
\vskip -0.2in
\end{figure}

Finally, Figure \ref{fig:vo_qk} shows the effect of varying $d_{head,vo}$ and $d_{head,qk}$ on accuracy with $\softmax_{\geq 0.95}$, and  corroborates our observation following (\ref{eq:layer_rank_lower}) that $d_{head,vo}$ ought to play a bigger role than $d_{head, qk}$ in determining the capacity of an attention layer.
We see that while increasing $d_{head, vo}$ and $d_{head,qk}$ both generally increase the average accuracy, the effect is more pronounced with $d_{head,vo}$.
Indeed, accuracy increases from top right to bottom left along all diagonals of constant $d_{head,vo} + d_{head,qk}$, suggesting that we can increase capacity without more parameters by raising $d_{head,vo}$ at the expense of $d_{head,qk}$.

\section{Contributions, Limitations \& Future Work}
\label{sec:contributions_limitations_future_work}

\subsection{Contributions}
In this paper, we have attempted to understand fact storage in attention layers from a linear-algebraic perspective.
We have introduced novel $3$-tensors $D$ and $L$ corresponding to databases and attention layers, and proved bounds for their rank, while also providing empirical evidence of the usefulness of these quantities as well as the impact of $\argmax$ and $\softmax$ on rank.

Our main theoretical contribution is the construction of $L$, which provides a deeper understanding of the roles of the query-key and value-output weights, and (\ref{eq:layer_rank_lower}), which provides a measure of the capacity of an attention layer. 
By decoupling $d_{head,vo}$ from $d_{head,qk}$, we showed that we can increase model capacity without increasing model size by increasing $d_{head,vo}$ at the expense of $d_{head,qk}$, a result that we have observed empirically.
We have shown by example that it is possible to use the components of $L$ to guide the encoding of facts directly into circuits, but also that even with hand-crafted examples, it remains hard to localize where specific facts are stored.

\subsection{Limitations and Extensions}

Our paper suggests many extensions and directions for future work.
Although we focus on the case of database triples, it is not hard to generalize $D$ to an $n$-tensor storing $n$-gram statistics, while $L$ will still have the same form as (\ref{eq:layer_tensor}), but with more involved $A^h$ and $V^h$. Of course, our rank bounds will not hold in this more general setting, so new bounds will have to be derived.
Our remarks in Section \ref{sec:low_rank_representations_of_databases} concerning low-rank representations of $D$ still hold, and future work could look into deriving better bounds on the \emph{effective} rank of $D$.
General efforts to study the effect of $\softmax$ on rank in more realistic settings, and to derive tighter bounds on the tensor rank will be also useful.

We expect the addition of layer norms and biases to only minimally affect the definition of the rank of an attention layer, but the effect of positional embeddings, MLP layers and having multiple layers is unknown.
Even in our original setting, we note in Figure \ref{fig:vo_qk} that for small values of $d_{head,vo}$, increasing $d_{head,qk}$ causes an increases in accuracy that is not accounted for by (\ref{eq:layer_rank_lower}), suggesting that more investigation needs to be done to derive better bounds on $\rank(L)$ and $\rank(A^h)$ that involve $d_{head,qk}$.
Note also, however, that when $d_{head,vo}$ is larger, the positive effect of increasing $d_{head,qk}$ starts to vanish and even starts to become detrimental. More work is needed to confirm if this pattern persists in larger models, and why. 

On the experimental front, we have only investigated small toy models and databases. 
It remains to be seen if our results continue to hold for larger models trained on larger databases.
We anticipate that the rank-increasing effects of $\softmax$ will become more severe as the rank of $L$ increases, and it might be necessary to scale $\tau$ according to the rank of $L$, instead of keeping it constant across models as we have done.
As the models and databases scale up, it will also quickly become infeasible to construct and study the tensors $D$ and $L$ holistically. Future work might thus need to consider ways of analysing sub-tensors of $D$ and $L$ and their rank.

In addition to considering much larger models and databases, future work could make the databases more realistic by adding noise and missing data, by using real-world databases or RDF triple stores, and by expressing facts in natural language instead of formal RDF triples.
The impact of varying $d_{head,vo}$ and $d_{head,qk}$ separately could also be tested on language models that are not solely trained for database memorization, to see if reducing $d_{head,qk}$ has detrimental effects on other tasks.

Evidently, our work is just a small step, and suggests more questions than it answers. Nevertheless, we hope it demonstrates the usefulness and potential of a tensor-theoretic treatment of tasks and models, and motivates further work in this direction for other tasks and types of models.




\bibliographystyle{plain}
\bibliography{biblio}

\begin{thebibliography}{10}

\bibitem{allen2023physics}
Zeyuan Allen-Zhu and Yuanzhi Li.
\newblock Physics of language models: Part 3.1, knowledge storage and
  extraction.
\newblock {\em arXiv preprint arXiv:2309.14316}, 2023.

\bibitem{allen2024physics}
Zeyuan Allen-Zhu and Yuanzhi Li.
\newblock Physics of language models: Part 3.3, knowledge capacity scaling
  laws.
\newblock {\em arXiv preprint arXiv:2404.05405}, 2024.

\bibitem{chughtai2024summing}
Bilal Chughtai, Alan Cooney, and Neel Nanda.
\newblock Summing up the facts: Additive mechanisms behind factual recall in
  {LLM}s.
\newblock {\em arXiv preprint arXiv:2402.07321}, 2024.

\bibitem{elhage2021mathematical}
Nelson Elhage, Neel Nanda, Catherine Olsson, Tom Henighan, Nicholas Joseph, Ben
  Mann, Amanda Askell, Yuntao Bai, Anna Chen, Tom Conerly, et~al.
\newblock A mathematical framework for transformer circuits.
\newblock {\em Transformer Circuits Thread}, 1(1):12, 2021.

\bibitem{geva2020transformer}
Mor Geva, Roei Schuster, Jonathan Berant, and Omer Levy.
\newblock Transformer feed-forward layers are key-value memories.
\newblock {\em arXiv preprint arXiv:2012.14913}, 2020.

\bibitem{haastad1990tensor}
Johan H{\aa}stad.
\newblock Tensor rank is {NP}-complete.
\newblock {\em Journal of algorithms}, 11(4):644--654, 1990.

\bibitem{lu2024scaling}
Xingyu Lu, Xiaonan Li, Qinyuan Cheng, Kai Ding, Xuanjing Huang, and Xipeng Qiu.
\newblock Scaling laws for fact memorization of large language models.
\newblock {\em arXiv preprint arXiv:2406.15720}, 2024.

\bibitem{nichani2024understanding}
Eshaan Nichani, Jason~D Lee, and Alberto Bietti.
\newblock Understanding factual recall in transformers via associative
  memories.
\newblock {\em arXiv preprint arXiv:2412.06538}, 2024.

\bibitem{swernofsky2018tensor}
Joseph Swernofsky.
\newblock Tensor rank is hard to approximate.
\newblock In {\em Approximation, Randomization, and Combinatorial Optimization.
  Algorithms and Techniques (APPROX/RANDOM 2018)}. Schloss
  Dagstuhl-Leibniz-Zentrum fuer Informatik, 2018.

\bibitem{vaswani2017attention}
Ashish Vaswani, Noam~M. Shazeer, Niki Parmar, Jakob Uszkoreit, Llion Jones,
  Aidan~N. Gomez, Lukasz Kaiser, and Illia Polosukhin.
\newblock Attention is all you need.
\newblock In {\em Neural Information Processing Systems}, 2017.

\bibitem{yang2017breaking}
Zhilin Yang, Zihang Dai, Ruslan Salakhutdinov, and William~W Cohen.
\newblock Breaking the softmax bottleneck: A high-rank {RNN} language model.
\newblock {\em arXiv preprint arXiv:1711.03953}, 2017.

\bibitem{yao2024knowledge}
Yunzhi Yao, Ningyu Zhang, Zekun Xi, Mengru Wang, Ziwen Xu, Shumin Deng, and
  Huajun Chen.
\newblock Knowledge circuits in pretrained transformers.
\newblock {\em arXiv preprint arXiv:2405.17969}, 2024.

\end{thebibliography}

\end{document}